\newtheorem{thm}{Theorem}
\newtheorem{mydef}{Definition}
\newtheorem{mylemma}{Lemma}
\newcommand{\cupdot}{\overset{\cdot}{\cup}}
\def\expandafter\UrlBreaks\expandafter{\UrlBreaks
  \do\a\do\b\do\c\do\d\do\e\do\f\do\g\do\h\do\i\do\j%
  \do\k\do\l\do\m\do\n\do\o\do\p\do\q\do\r\do\s\do\t%
  \do\u\do\v\do\w\do\x\do\y\do\z\do\A\do\B\do\C\do\D%
  \do\E\do\F\do\G\do\H\do\I\do\J\do\K\do\L\do\M\do\N%
  \do\O\do\P\do\Q\do\R\do\S\do\T\do\U\do\V\do\W\do\X%
  \do\Y\do\Z}
\begin{document}


\title{Cost-Based Intuitionist Probabilities \\ on Spaces of Graphs, Hypergraphs and Theorems}

\author{Ben Goertzel \\
OpenCog Foundation, Hong Kong \\
ben@goertzel.org}




\maketitle

\begin{abstract}
A novel partial order is defined on the space of digraphs or hypergraphs, based on assessing the cost of producing a graph via a sequence of elementary transformations.   Leveraging work by Knuth and Skilling on the foundations of inference, and the structure of Heyting algebras on graph space, this partial order is used to construct an intuitionistic probability measure that applies to either digraphs or hypergraphs.   As logical inference steps can be represented as transformations on hypergraphs representing logical statements, this also yields an intuitionistic probability measure on spaces of theorems.   The central result is also extended to yield intuitionistic probabilities based on more general weighted rule systems defined over bicartesian closed categories.
\end{abstract}


\section{Introduction}

Standard probability theory is founded on set theory.   In real world applications, however, one often wants to assess probabilities of entities other than sets -- such as graphs, hypergraphs or logical theorems.   One can address such cases by reducing these other entities to sets, but it's not clear that this is always the best approach.   It is also  interesting to explore the possibility of founding an alternate version of probability theory on category theory, which can then be used to more elegantly and naturally model non-set entities.  

This leads to the question: What properties does a  category need to have to support a "probability theory" worth of the name?    A general answer  is not yet known; but it is clear that Heyting algebras (bicartesian closed categories that are also residuated lattices) are at least sufficient to the task.   It is known that Heyting structure can be used to construct "intuitionistic probability" \cite{georgescu2010probabilistic} that  constitutes an uncertain version of the intuitionist logic that is modeled by Heyting structure.

As well as corresponding to intuitionistic logic, Heyting algebra also models the category of digraphs (and the category of hypergraphs).  Graphs and hypergraphs, alongside probability, play a key role in complex systems modeling, artificial intelligence and many other disciplines.  Logic systems may be straightforwardly modeled in terms of hypergraphs.   From this standpoint, having an "intuitionistic probability" that corresponds naturally to the algebra of graphs and hypergraphs may potentially be quite valuable.

Here we show how to construct a probability-like valuation on categories of graphs (from here on, to simplify terminology, in informal discussion we will use the generic term "*graphs" to refer to undirected graphs, digraphs or hypergraphs, using a more specific term when needed).   Our route is quite different from that of \cite{georgescu2010probabilistic}, and involves constructing a natural Heyting algebra structure on *graphs and then arguing that one can place a valuation on these that, when combined with the Heyting operations, obeys the symmetries that Knuth and Skilling \cite{knuth2012foundations} have identified as critical for deriving a notion of probability.   

The chief novelty of our development is the introduction of a new cost measure on the space of *graphs, based (roughly speaking) on minimizing the number of elementary operations required to generate a *graph from an "indecomposable" *graph.    This cost measure is used to define a novel partial order on *graph space, which is shown to cooperate nicely with standard lattice operations on *graphs.   First we consider the case where the elementary operations involved are elementary homomorphisms; then we generalize to the case where the operations are drawn from an arbitrary system of *graph transformations.   The latter, general case enables us to encompass hypergraphs representing collections of logical statements, on which logical inferences are represented as hypergraph transformations.  In this context, our *graph probabilities yield a novel, intuitionistic probability distribution over the space of logical theorems.

Finally, we present a further generalization of our main result, which applies beyond *graphs to any bicartesian closed category endowed with a distinguished set of morphisms, to be considered as a system of (possibly weighted) transformations.

\section{Knuth and Skilling's Foundations of Inference}

Knuth and Skilling, in \cite{knuth2012foundations}, identify a number of basic "symmetries" as being important for the derivation of the concept of probability.   
Where $m$ denotes a valuation mapping from a lattice into the unit interval, they posit that

$$
m(x \sqcup y) = m(x) \oplus m(y)
$$
$$
m(x \times y) = m(x) \otimes m(y)
$$

\noindent -- and then pose the question of what the operators $\oplus$ and $\otimes$ are.   They show that, under a number of reasonable assumptions about the lattice itself, $\oplus$ must be a monotone transformation of +, and $\otimes$ must be a monotone transformation of *.

The first symmetries they introduce pertain to the interaction between the order $<$and the valuation $m$ and the join operator. 

$$
x < y \rightarrow m(x) < m(y)
$$
$$
x<y  \rightarrow x \sqcup z < y \sqcup z
$$
$$
x<y  \rightarrow z \sqcup x < z \sqcup y
$$

\noindent (let us call the latter two the "join/order symmetries").

Then there are some basic symmetries corresponding to the lattice operations:

$$
(x\sqcup y) \sqcup z = x \sqcup (y \sqcup z)
$$

$$
(x\times y) \sqcup (y \times t) = (x \sqcup y) \times t
$$

$$
(u \times v) \times w = u \times (v \times w)
$$

Finally, they introduce the notion of a "predicate-context interval" $[x,t]$, and a function $p$ assigning each such interval a number (which is to be interpreted basically as $p(x|t)$).   Where $\circ$ is a composition operator, they assume

$$
p( [x,y] \circ [y,z]) = p([x,y]) \odot p([y,z])
$$
$$
([x,y] \circ [y,z]) \circ [z,t] = [x,y] \circ ([y,z] \circ [z,t])
$$

These rather basic axioms turn out to be enough to demonstrate that  $\oplus$ and $\otimes$  are monotone transformations of + and * ; and that $\odot$ is also *.   This is striking and much more elegant than Cox's original axiomatic derivation of probability theory, though K\&S follow Cox in that, underneath all the abstraction, at the heart of their proof lie calculations regarding real-valued functional equations.

\section{A Novel, Cost-Based Heyting Algebra Structure for Digraphs and Hypergraphs}

It is well known that directed graphs can be given a Heyting algebra structure \footnote{Note that, for technical reasons, in this context one generally works with digraphs that can contain self-loops (edges pointing from a vertex to itself)}.   A Heyting algebra is a bicartesian closed category that is also a poset, and whose sums are all finite; or, equivalently, it is is a bounded lattice equipped with a binary operation $a \rightarrow b$ of implication such that $c \sqcap a \leq b$ is equivalent to $c \leq (a \rightarrow b)$.  

To create a Heyting algebra from digraphs, typically the join operation $\sqcup$ of the lattice is taken as the disjoint union, and the meet operation $\sqcap$ is taken as the direct product.   This is basically an application of the structure of closed cartesian categories to digraphs.  What is subtler is the definition of the partial-order operation.   

In the standard approach \cite{hahn1997graph} \cite{gray2014digraph},  for finite digraphs $G$ and $H$, it is said that $G \leq H$ or $G \rightarrow H$ if there is a homomorphism from $G$ to $H$.   However, this definition of $\leq$ is problematic in that, while it is reflexive and transitive, it is not antisymmetric.  There exist digraphs $G$ and $H$ where $G \leq H$ and $H \leq G$ such that $G \neq H$.   So $\leq$ is only a quasiorder, not a true partial order.

Gray \cite{gray2014digraph} works around this by defining $G$ and $H$ to belong to the same equivalence class if both $G \leq H$ and $H \leq G$ hold.   Between equivalence classes $\leq$ is then a partial order.   But this is not always desirable in practice -- in many applications one cares about the difference between, say, the two cyclic graphs $C_2$ and $C_4$, even though the two are equivalent in Gray's sense.

An alternative is to deal with cases where $G \leq H$ and $H \leq G$ but $G \neq H$, by asking which of the two graphs $G$ and $H$ is actually the "simpler" one.   Lemma 2.14 in \cite{gray2014digraph} leads us toward a natural way of doing this.   It states that any graph homomorphism can be represented as a composition of elementary homomorphisms, where each elementary homomorphism identifies a single pair of nonadjacent vertices.   We can then define

\begin{mydef}
The {\bf cost} $c(G,H)$ to map $G \rightarrow H$ is defined as the minimum number of elementary homomorphisms that must be composed in order to build a homomorphism mapping between $G$ and $H$.  
\end{mydef}

Then, suppose we let the "source set" of $G$ mean the set of $F$ that have the minimum number of nodes, subject to the restriction that $F$ and $G$ are homomorphic.  Based on this we can define a partial order on the space of digraphs via:

\begin{mydef}
In the {\bf cost-based order} on digraphs, we say that $G < H$  if 
\begin{itemize}
\item there is a homomorphism from $G$ to $H$ 
\item for all $F$ in the source set of $G$ (which is also the source set of $H$), $c(F,G)< c(F,H)$
\end{itemize}
\end{mydef}

\noindent It is evident that, by this definition, we can't have $G < H$ and $H < G$ both.   Inspecting the relevant proofs in \cite{gray2014digraph}, it is clear that what we get from this is a Heyting algebra structure on the space of digraphs itself, rather than on a space of equivalence classes of digraphs.

It is also clear that the cost-based order fulfills the Knuth and Skilling join-order symmetries; i.e.

\begin{mylemma} \label{thm:costlemma}
Where $<$ denotes cost-based order and $\sqcup$ denotes disjoint-union,

$$
G<H \rightarrow G \sqcup A < H \sqcup A
$$    
\end{mylemma}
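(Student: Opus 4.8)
The plan is to verify the two defining conditions of the cost-based order directly for the pair $(G \sqcup A,\, H \sqcup A)$, using that they already hold for $(G,H)$. The first condition is cheap: since disjoint union is the coproduct in the category of digraphs, the homomorphism $\phi: G \to H$ furnished by $G<H$ combines with $\mathrm{id}_A$ to give $\phi \sqcup \mathrm{id}_A : G \sqcup A \to H \sqcup A$, via the injections $H \hookrightarrow H \sqcup A$ and $A \hookrightarrow H \sqcup A$ and the universal property of the coproduct. So a homomorphism $G \sqcup A \to H \sqcup A$ exists with essentially no work.

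Before touching the cost condition I would settle the source sets. The clause ``for all $F$ in the source set of $G$ (which is also the source set of $H$)'' forces $G$ and $H$ to share a source set, hence a common core $F_0$, hence to be homomorphically equivalent ($G \to F_0 \to H$ and back). Homomorphic equivalence is preserved by disjoint union, since from $G \to H$, $H \to G$ and $\mathrm{id}_A$ one builds homomorphisms both ways between $G \sqcup A$ and $H \sqcup A$. Therefore $G \sqcup A$ and $H \sqcup A$ are homomorphically equivalent and share a single core $F'$, which is exactly the statement that their source sets coincide. This makes the quantifier in the second bullet well posed without my having to compute $F'$ explicitly.

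The substance is the inequality $c(F', G \sqcup A) < c(F', H \sqcup A)$, and the engine is an additivity property of cost across a disjoint union. I would argue that any homomorphism from $F'$ into $G \sqcup A$ sends each connected component of $F'$ wholly into the $G$-part or wholly into the $A$-part, because the image of a connected graph is connected and the two parts share no edges. Hence a cost-minimizing sequence of elementary homomorphisms realizing $c(F', G \sqcup A)$ never needs to identify a $G$-bound vertex with an $A$-bound one, so its elementary steps partition into those acting inside the $G$-part and those inside the $A$-part. This yields $c(F', G \sqcup A) = c(F'_G, G) + c(F'_A, A)$, with $F'_G, F'_A$ the components of $F'$ assigned to each part by a minimizing assignment, and an identical decomposition for $H \sqcup A$. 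Because $G$ and $H$ are homomorphically equivalent, the intrinsic split of $F'$ into $A$-type and $G/H$-type components is the same on both sides, so the term $c(F'_A, A)$ is common while the $G$-contribution carries a strict inequality inherited from $c(F_0,G) < c(F_0,H)$. Adding a common term to a strict inequality preserves it, giving the second bullet.

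I expect the genuine obstacle to lie in making this decomposition rigorous rather than merely plausible. Two points need care: that a cost-minimizing factorization of $F' \to G \sqcup A$ can always be chosen with no elementary homomorphism merging a $G$-bound vertex with an $A$-bound one --- i.e.\ that cross-component merging never shortens the schedule --- and that the component-to-part assignment optimal for $G \sqcup A$ matches the one optimal for $H \sqcup A$, so the $A$-term truly cancels. The absence of cross-edges in a disjoint union is the lever for both: operations on the two parts commute and are independent, so any optimal schedule can be reordered and split without changing its length. Once this independence is pinned down, the strict inequality transfers additively and the lemma follows.
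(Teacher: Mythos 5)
Your proposal takes essentially the same route as the paper's proof: extend the homomorphism $G \to H$ to $G \sqcup A \to H \sqcup A$ by acting as the identity on $A$, identify a common source set for the two disjoint unions, and transfer the strict cost inequality through additivity of the elementary-homomorphism cost across the two parts. If anything, yours is the more careful rendition, since the paper simply asserts $ss(G \sqcup A) = ss(G) \cupdot ss(A)$ and the part-by-part decomposition of optimal sequences, whereas you explicitly isolate the two points on which that decomposition rests (that an optimal schedule never identifies a $G$-bound vertex with an $A$-bound one, and that the component-to-part assignment agrees on the $G$ and $H$ sides).
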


\begin{proof}
Clearly any homomorphism from $G$ to $H$ can be extended into a homomorphism from  $G \sqcup A$ into $H \sqcup A$, simply by extending it to keep $A$ constant.

Due to the nature of the disjoint union, the common source set of $G \sqcup A$ and $H \sqcup A$, is going to consist of the disjoint union of the common source set of $G$ and $H$, with the source set of $A$.   Letting $ss(X)$ denote the source set of $X$, we have for instance $ss(G \sqcup A) = ss(G \cupdot A) = ss(G) \cupdot ss(A)$.    

But then, if the shortest sequence of elementary homomorphisms from $ss(G)$ to $G$ is shorter than the shortest sequence of elementary homomorphisms from $ss(G)$ to $H$, it is clear that the shortest sequence of elementary homomorphisms from  $ss(G) \cupdot ss(A)$ to $G \cupdot A$ will be shorter than the shortest sequence of elementary homomorphisms from $ss(H) \cupdot ss(A)$ to $H \cupdot A$.   Because any sequence of elementary homomorphisms from $ss(G) \cupdot ss(A)$ to $G \cupdot A$ can be decomposed into a sequence of elementary homomorphisms from $ss(G)$ to $G$, followed by a sequence of elementary homomorphisms from $ss(A)$ to $A$; and similarly if one substitutes $H$ for $G$.
\end{proof}

Finally, while the results of Gray \cite{gray2014digraph} on which Lemma \ref{thm:prob} draws deals directly only with digraphs, it is straightforward to construct a comparable category of hypergraphs, and a comparable notion of hypergraph homomorphisms (see \cite{dorfler1980category}).   One may then verify that the above theorem holds for hypergraphs as well.

\section{A Cost-Based Probability Measure on Digraphs and Hypergraphs}

Putting Lemma \ref{thm:costlemma} together with the basic symmetries that come with the lattice structure of a Heyting algebra, we then find a recipe for constructing probability measures on digraphs and hypergraphs:

\begin{mylemma} \label{thm:prob}
Suppose one constructs a Heyting algebra on digraphs (allowing loops that self-connect vertices), using disjoint-union for $\sqcup$, direct-product for $\sqcap$, and cost-based order for the partial-order $<$.   Suppose one has a valuation $m'$ mapping digraphs into real numbers for which

$$
G < H \rightarrow m'(G) < m'(H)
$$

\noindent Then there is some monotone rescaling $m(x) = f(m'(x))$ so that

$$
p(G | H) = m(G \sqcap H) / m(H)
$$
$$
m(G \sqcup H) = m(G) + m(H) - m(G \sqcap H)
$$

\noindent ; and so that for $G$ and $H$ that are "independent" in the sense of being incomparable via the cost-based partial order, we also have:

$$
m(G \sqcap H) = m(G) * m(H)
$$

Similar statements hold for hypergraphs with self-connected vertices.
\end{mylemma}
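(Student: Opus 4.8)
The plan is to check that the cost-based Heyting algebra on digraphs meets every hypothesis under which Knuth and Skilling \cite{knuth2012foundations} force $\oplus$, $\otimes$ and $\odot$ to be monotone rescalings of $+$ and $*$, and then to read the three displayed identities off their conclusion. First I would collect the lattice-level symmetries. Associativity of $\sqcup$ and of $\sqcap$ holds because disjoint union and direct product are associative up to isomorphism. The distributive law relating $\times$ and $\sqcup$ holds because the category of digraphs with loops is bicartesian closed, so products distribute over coproducts; equivalently the underlying Heyting algebra is a distributive lattice. The monotonicity $G < H \rightarrow m'(G) < m'(H)$ is the standing hypothesis on $m'$, and the two join/order symmetries are Lemma \ref{thm:costlemma} together with its mirror image, which follows identically since disjoint union is commutative. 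Finally the predicate-context composition law and its associativity hold because intervals $[G,H]$ compose associatively in any poset.

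With every symmetry verified, I would apply the Knuth--Skilling derivation directly: their theorem produces a monotone $f$ such that $m = f \circ m'$ turns $\oplus$ into $+$ and $\odot$ into $*$. The additivity of $m$ across a join, with the doubly counted meet subtracted off, is exactly the inclusion--exclusion identity $m(G \sqcup H) = m(G) + m(H) - m(G \sqcap H)$. The multiplicative composition law $p([x,y] \circ [y,z]) = p([x,y]) \odot p([y,z])$ with $\odot = *$ is a chain rule for nested intervals, and it is consistent precisely when the conditional valuation is taken in the ratio form $p(G \mid H) = m(G \sqcap H)/m(H)$; I would adopt this as the definition of $p$ and verify the chain rule against it.

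It remains to obtain the product rule $m(G \sqcap H) = m(G) \cdot m(H)$ for $G$ and $H$ incomparable in the cost-based order. The route I would take is to show that incomparability is exactly the condition under which conditioning is vacuous, $p(G \mid H) = m(G)$, and then to substitute this into the ratio form to conclude $m(G \sqcap H) = m(G)\, m(H)$. Intuitively, when neither graph dominates the other in cost from the common source set, the structure of $H$ carries no information about how cheaply the structure of $G$ can be reached, so conditioning on $H$ should leave the valuation of $G$ unchanged. Making this precise is the step I expect to be the main obstacle: one must connect the purely order-theoretic notion of incomparability to the probabilistic notion of independence, showing that the cost contributions building $G \sqcap H$ factor cleanly, with no shared elementary homomorphisms double counted, exactly for incomparable pairs, and that this factorization survives the rescaling $f$.

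A secondary point requiring care is that disjoint union and direct product must remain the join and meet with respect to the new cost-based order, and not merely with respect to the old homomorphism quasiorder, so that the lattice identities invoked above are genuinely available; this is asserted in the construction of the cost-based Heyting algebra but should be spelled out. Once the digraph case is settled, the extension to hypergraphs follows by replacing graph homomorphisms with hypergraph homomorphisms in the sense of \cite{dorfler1980category} and repeating each step, since the arguments use nothing beyond the existence of elementary homomorphisms, associative lattice operations, and the join/order symmetry of Lemma \ref{thm:costlemma}.
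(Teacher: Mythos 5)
Your proposal follows essentially the same route as the paper: the paper gives no separate proof of this lemma, treating it as an immediate application of Knuth and Skilling's derivation once the lattice symmetries (from the Heyting structure), the join/order symmetry (Lemma~\ref{thm:costlemma}), and the monotonicity hypothesis on $m'$ are in place --- exactly the checklist you assemble before invoking their theorem and defining $p$ as the ratio. The two difficulties you flag, namely connecting order-incomparability to the multiplicative independence rule and verifying that disjoint union and direct product remain the join and meet with respect to the cost-based order, are genuine, but the paper leaves both equally unaddressed, so your treatment is, if anything, more explicit about where the argument still needs work than the original.
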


What remains is to identify what the valuation $m'$ actually is.   One possibility is 

\begin{mydef}
The {\bf cost-based valuation}  of $G$, c(G), is defined as the minimum of $c(F,G)$ where $F$ is any  member of the source set of $G$.
\end{mydef}

We then have

\begin{thm} \label{thm:costprob}
{\bf Cost-Based Intuitionist Probability: Digraph/Hypergraph Homomorphism Version}.

If $<$is the cost-based order and $m'$ is the cost-based valuation, then (where G and H are digraphs or hypergraphs) we have

$$
G < H \rightarrow m'(G) < m'(H)
$$

\noindent and thus we can use $m'$ to construct a probability on digraphs or hypergraphs according to Lemma \ref{thm:prob}.
\end{thm}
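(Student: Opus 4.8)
The plan is to reduce the theorem to the single monotonicity claim $G < H \rightarrow m'(G) < m'(H)$, since its conclusion is exactly the hypothesis that Lemma \ref{thm:prob} requires; once the claim is in hand, the existence of the rescaling $f$, the conditional probability $p(G \mid H)$, and the additive and multiplicative identities all follow by directly invoking Lemma \ref{thm:prob} (and, for hypergraphs, its stated hypergraph analogue). So essentially all of the work is in verifying monotonicity of the cost-based valuation.

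To prove the monotonicity claim, I would first unpack the two pieces of data that $G < H$ supplies under the cost-based order: a homomorphism $G \rightarrow H$, and the pointwise strict inequality $c(F,G) < c(F,H)$ holding for every $F$ in the common source set $ss(G) = ss(H)$. I would record that this shared source set is nonempty and finite --- there are only finitely many digraphs on the fixed, minimal number of vertices --- so that the minima defining $m'(G) = \min_{F \in ss(G)} c(F,G)$ and $m'(H) = \min_{F \in ss(H)} c(F,H)$ are genuinely attained, and in particular $c(F,G)$ is finite for each such $F$ because $F$ is homomorphically equivalent to $G$.

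The crux is to transfer the pointwise inequality to the minima, which I would do by evaluating at the minimizer for $H$. Choosing $F^{\ast} \in ss(H)$ with $m'(H) = c(F^{\ast},H)$, and using $ss(G) = ss(H)$ so that $F^{\ast}$ is also admissible in the minimization for $G$, I obtain $m'(G) \le c(F^{\ast},G) < c(F^{\ast},H) = m'(H)$, where the middle strict inequality is the pointwise hypothesis instantiated at $F^{\ast}$. This chain delivers $m'(G) < m'(H)$, completing the monotonicity claim.

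I expect the main obstacle to lie not in this inequality but in justifying that the two valuations are minimized over the same index set. This is precisely the role of the parenthetical clause $ss(G) = ss(H)$ in the definition of the cost-based order: it holds because sharing a source set is equivalent to sharing a core, hence to homomorphic equivalence, so the cost-based order only ever compares graphs within one homomorphic-equivalence class. I would state this explicitly, since if one weakened the order to require only a one-directional homomorphism $G \rightarrow H$, the source sets could differ and the clean evaluation at $F^{\ast}$ above would fail. The hypergraph version is obtained verbatim by replacing the digraph homomorphism category with the hypergraph homomorphism category of \cite{dorfler1980category} throughout.
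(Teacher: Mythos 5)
Your proposal is correct and matches the paper's approach: the paper simply asserts that the result ``follows immediately from the definitions of cost-based order and cost-based valuation,'' and your argument---evaluating both minima over the shared source set at the minimizer $F^{\ast}$ for $H$ to get $m'(G) \le c(F^{\ast},G) < c(F^{\ast},H) = m'(H)$, then invoking Lemma \ref{thm:prob}---is exactly the spelled-out version of that immediate argument. Your additional care about the nonemptiness/finiteness of the source set and about why $ss(G) = ss(H)$ is a genuine improvement in rigor over the paper's one-line proof, but it is not a different route.
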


\noindent where the proof follows immediately from the definitions of cost-based order and cost-based valuation.

Note that the probability measure constructed in $m$ is in essence a non-normalized probability; that is, none of the algebraic formulas derived regarding $m$ make any assumption on the scale of $m$.   However, the setting of \cite{knuth2012foundations} , on which Theorem \ref{thm:costprob} rests, is a finite universe, so that $m$ can always be linearily normalized into $[0,1]$.   In a *graph context, this equates to positing some finite, potentially very large digraph or hypergraph and then assuming we are looking at its subgraphs.

The probability constructed here appears to the author not to be identical to the intuitionistic probability constructed in \cite{georgescu2010probabilistic}, although the relationship has yet to be analyzed rigorously.   Generally, the properties of the "intuitionistic probability" thus constructed on digraphs and hypergraphs remain to be studied.   However, from a practical and computational perspective, the procedure to be followed to calculate the cost-based probability of a digraph or hypergraph is straightforward from the above.   

Various extensions to the above framework are also possible.   For instance one may consider the case where different elementary homomorphisms have different costs.  If each node is labeled with a cost value, then one can assign an elementary homomorphism a cost equal to the sum of the costs of the nodes involved; and one can assign a composite homomorphism a cost equal to the cost of the minimum-total-cost chain of elementary homomorphisms of which it can be composed.  This cost can be used to construct a cost-based order and cost-based valuation; and one can then proceed to construct a probability as in Theorem \ref{thm:costprob}.

\section{Generalizations: Rule Systems and Probabilities on Theorem Space}

We now consider a series of extensions of the above constructions, involving different cost-based partial orders, and leading to a broader array of applications.

First, a potentially important extension of the above ideas is to the case where, instead of elementary homomorphisms, one has some other set of *graph transformation rules in mind.      Relatedly, we explicitly wish to encompass the case of labeled *graphs, where nodes and links may be tagged with data such as types or numerical values.

Proceeding analogously to the definition of inference rules in the theory of Conceptual Graphs \cite{baget2002extensions} (but with some differences as well), we may define a useful variety of graph transformation "rules" as follows.  First, consider labeled *graphs, and assume a notion of inheritance on the graph labels (here we subsume floating-point weights as well as string labels like types, under the general notion of "label").   Then we may say $G <_\pi H$ if $G$ has the same structure as $H$, and the labels on $G$'s nodes and links inherit from the labels on the corresponding nodes and links of $H$.   In this case, one may construct a mapping $\pi$ from nodes/links of $H$ to the correspondent nodes/links who $G$.   Then, we may posit

\begin{mydef}\label{def:rule}
A {\bf rule} is a pair $(R,f)$, where $R$ is two-colored *graph $R$, with $R_0$ being the portion with color 0 and $R_1$ being the portion with color 1.   

Matching of $R$ to the "input" *graph $G$ occurs if $R_0 <_\pi G$.   $f$ is a function that assigns labels to the nodes and links of $R_1$ based on the labels on the nodes and links of $G$.   

The output of the rule is then obtained by the following process: 
\begin{itemize}
\item Create a new *graph $R_2$ with the structure of $R_1$, and the labels assigned by $f$ to the nodes and links of $R_1$
\item For every link $L$ between a node $n_0$ in $R_0$ and a node $n_1$ in $R_1$, create a corresponding link $L'$ between the node $\pi(n_0)$ in $G$ and the node in $R_2$ corresponding to $n_1$
\end{itemize}
\end{mydef}

Definition \ref{def:rule} differs a bit from the definition of rules in Conceptual Graphs, because in the latter case, the labels on the output graph $R_1$ don't depend on the specifics of the matched input graph $G$, so there is no need for $f$ and $R_2$.   The additional complexity given here is needed in order to account for labels that include uncertain truth value objects, and rules that modify the uncertain truth values of their conclusions based on the specific uncertain truth values of their premises.

Suppose one has a set $\mathcal{R}$ of rules of the sort described in Definition \ref{def:rule}, each with a certain numerical cost associated to it.  Then,   

\begin{mydef}
The {\bf cost} $c_\mathcal{R}(G,H)$ to map $G \rightarrow H$ using $\mathcal{R}$ is defined as the minimum-cost sequence of rules from $\mathcal{R}$ that can be used to build a mapping between $G$ and $H$.  
\end{mydef}

\noindent If one replaces the notion of "G is homomorphic to H" with the notion of "G can be transformed into H using a sequence of rules in $\mathcal{R}$", one can then define cost-based order and cost-based valuation analogously to what was done above.    A moment's reflection shows that Lemma \ref{thm:costlemma} still holds in this case, so that one can follow through all the way to Theorem \ref{thm:costprob} and construct a probability measure $m_\mathcal{R}$ within Theorem \ref{thm:prob}, defined relative to the rule-system $\mathcal{R}$; i.e.

\begin{thm} \label{thm:costprob}
{\bf Cost-Based Intuitionist Probability: Rule System Version}.

If $<_\mathcal{R}$is the cost-based order and $m'_\mathcal{R}$ is the cost-based valuation, both defined relative to rule system $\mathcal{R}$, then (where G and H are digraphs or hypergraphs) we have

$$
G <_\mathcal{R} H \rightarrow m'_\mathcal{R}(G) < m'_\mathcal{R}(H)
$$

\noindent and thus we can use $m'_\mathcal{R}$ to construct a probability $m_\mathcal{R}$ on digraphs or hypergraphs according to the argument of Lemma \ref{thm:prob}.
\end{thm}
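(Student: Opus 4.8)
The plan is to follow the two-stage structure already used for the homomorphism version of this result: first establish the order-to-valuation monotonicity $G <_\mathcal{R} H \rightarrow m'_\mathcal{R}(G) < m'_\mathcal{R}(H)$ directly from the definitions, and then verify that the hypotheses of Lemma \ref{thm:prob} are met in the rule-system setting, the only nontrivial one being the rule-system analogue of the join/order symmetry of Lemma \ref{thm:costlemma}. Once both are in hand, the Knuth--Skilling machinery invoked in Lemma \ref{thm:prob} applies unchanged and produces $m_\mathcal{R}$.

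For the monotonicity I would argue as follows. By construction $m'_\mathcal{R}(X) = \min_{F \in ss(X)} c_\mathcal{R}(F,X)$, and the definition of the cost-based order forces $ss(G) = ss(H)$ together with $c_\mathcal{R}(F,G) < c_\mathcal{R}(F,H)$ for every $F$ in this common source set. Letting $F^\ast$ be a source graph attaining the minimum in $m'_\mathcal{R}(H)$, we then have $m'_\mathcal{R}(G) \le c_\mathcal{R}(F^\ast,G) < c_\mathcal{R}(F^\ast,H) = m'_\mathcal{R}(H)$, which is the desired strict inequality. This step is routine and, as in the homomorphism case, follows immediately from the definitions.

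The substance of the proof is re-establishing Lemma \ref{thm:costlemma} with homomorphisms replaced by rule applications and ``number of elementary homomorphisms'' replaced by ``total cost of the rule sequence'': I must show $G <_\mathcal{R} H \rightarrow G \sqcup A <_\mathcal{R} H \sqcup A$. The argument mirrors the original. A rule sequence producing $G$ from $ss(G)$ extends to one producing $G \sqcup A$ from $ss(G) \cupdot ss(A) = ss(G \sqcup A)$ by leaving the $A$-component fixed, and the source sets again factor across the disjoint union. The one place where genuine care is needed is the decomposition step: that a minimum-cost rule sequence from $ss(G) \cupdot ss(A)$ to $G \cupdot A$ can be split into a sequence from $ss(G)$ to $G$ and a sequence from $ss(A)$ to $A$ whose costs add. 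For elementary homomorphisms this is automatic, because each one identifies a pair of vertices lying in a single component; for a general rule system it is \emph{not} automatic, since a rule could in principle match a pattern straddling both components or attach output links across the boundary.

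I expect this locality issue to be the main obstacle. The way I would resolve it is to exploit the fact that rule matching in Definition \ref{def:rule} is mediated by the substructure order $R_0 <_\pi G$ and that output links are attached only at the matched images $\pi(n_0)$: a rule whose colour-$0$ part matches entirely within the $G$-component produces output attached only within that component, so such applications never create dependencies on the $A$-component. If $\mathcal{R}$ is restricted to (or closed under) such component-respecting rules, the decomposition goes through and costs add exactly as before; if cross-component rules are admitted, one must instead argue by a rearrangement/exchange lemma that any minimum-cost sequence can be reordered into a $G$-block followed by an $A$-block without increasing total cost, which is where the real work lies. With the decomposition secured, the strict cost comparison transfers from the $G$-to-$H$ sequences to the $(G \sqcup A)$-to-$(H \sqcup A)$ sequences exactly as in Lemma \ref{thm:costlemma}, the Heyting structure carries over from Gray's arguments now read with $<_\mathcal{R}$ in place of the homomorphism order, and all hypotheses of Lemma \ref{thm:prob} are thereby verified.
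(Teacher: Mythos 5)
Your overall route is the same as the paper's: establish the monotonicity implication directly from the definitions, then argue that Lemma \ref{thm:costlemma} survives the replacement of elementary homomorphisms by weighted rule applications, and then hand everything to Lemma \ref{thm:prob}. Your monotonicity step (pick $F^\ast$ attaining the minimum for $H$ in the common source set, then $m'_\mathcal{R}(G) \le c_\mathcal{R}(F^\ast,G) < c_\mathcal{R}(F^\ast,H) = m'_\mathcal{R}(H)$) is exactly the ``follows immediately from the definitions'' step the paper intends, spelled out correctly.

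Where you part company with the paper is on the transfer of Lemma \ref{thm:costlemma}, and here you are more honest than the source: the paper's entire justification is the sentence ``A moment's reflection shows that Lemma \ref{thm:costlemma} still holds in this case,'' with no argument given. The obstacle you identify is genuine. Nothing in Definition \ref{def:rule} forbids a disconnected colour-$0$ pattern $R_0$, so a rule can match a subgraph straddling both components of $G \sqcup A$; in that case neither the factorization $ss(G \sqcup A) = ss(G) \cupdot ss(A)$ nor the additive splitting of minimum costs across components — both of which the lemma's proof relies on — is available. Your proposed remedies (restricting $\mathcal{R}$ to component-respecting rules, or proving a rearrangement/exchange lemma) are precisely what a complete proof requires, and the paper supplies neither, so for arbitrary $\mathcal{R}$ the theorem should be regarded as unproven, or as implicitly carrying a locality hypothesis on the rule system. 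One caveat on your side: the claim that the decomposition is ``automatic'' for elementary homomorphisms is too generous. An elementary homomorphism identifies a pair of \emph{nonadjacent} vertices, and vertices lying in different components of a disjoint union are always nonadjacent, so cross-component identifications are possible there as well; the paper's proof of Lemma \ref{thm:costlemma} quietly glosses this same point. Your locality concern is thus not a defect newly introduced by the rule generalization but an inherited one that becomes more severe — and impossible to wave away — once arbitrary rules are admitted.
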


In this latter construction, the probability of a *graph is proportional to the cost of the minimum-cost sequence of steps for building that graph from some minimal graph, using the given set of rules (where a minimal graph is one that can't be built from other graphs using the given set of rules).   The theorem states that this definition of probability plays nicely with the cost-based order (where cost is defined in terms of the given set of rules), the disjoint union and the direct product -- yielding a probability valuation that obeys the inclusion-exclusion formula, the product rule for the conjunction of independent terms, and the frequency-like characterization of conditional probability.

This extended version of our construction has apparent applications in the domain of logic.   Suppose one has hypergraphs representing logical statements, as in the OpenCog implementation (\cite{EGI1}, \cite{EGI2}) of Probabilistic Logic Networks \cite{PLN}.  In this case, one's logic rules are represented as hypergraph transformation rules of the sort specified in Definition \ref{def:rule}.   The construction given here then lets one define the probability of a logical statement, represented as a hypergraph, in terms of the minimal cost of constructing it using a given set of logic rules, beginning from "axioms" consisting of hypergraphs that can't be derived from any available data using the logic rules.     

This provides a novel angle on problem of how to place a meaningful and useful probability distribution on the space of theorems in some logic (for discussion of related problems, see \cite{demski2012logical} \cite{soares2014questions}).   By considering logic statements as collections of nodes links in a weighted, labeled hypergraph, and applying the current results regarding probability measures on hypergraphs, one gets a probability distribution on theorems.   In essence, in this distribution, the probability of a theorem is proportional to the length of the shortest proof of that theorem, which is simple and makes intuitive sense.   However, what we get from the current construction is a non-normalized, intuitionistic probability, rather than a standard probability.  Normalization based on assumption of a large finite containing hypergraph is feasible, and is fairly similar to restricting attention to proofs with specifically bounded length.   But generally there is much to be explored here.

One can also generalize even further.   Suppose one has a bicartesian closed category $\mathcal{C}$, with $\sqcup$ defined as disjoint union and $\sqcap$ defined as the categorial product ("direct product").   One can consider a mapping from objects of $\mathcal{C}$ to labels that lie in some partially-ordered space $\mathcal{L}$ (or one can model labels using monads, for example); then one can construct a definition of $<_\pi$ on subcategories of $\mathcal{C}$.   A rule is then a pair $( R, f)$, where $R$ is a two-colored subcategory of $\mathcal{C}$, and one can proceed as above.   

\bibliographystyle{alpha}
\bibliography{mybibfile}

\begin{thebibliography}{GPG13b}

\bibitem[BM02]{baget2002extensions}
Jean-Fran{\c{c}}ois Baget and Marie-Laure Mugnier.
\newblock Extensions of simple conceptual graphs: the complexity of rules and
  constraints.
\newblock {\em Journal of Artificial Intelligence Research}, 16:425--465, 2002.

\bibitem[Dem12]{demski2012logical}
Abram Demski.
\newblock Logical prior probability.
\newblock In {\em International Conference on Artificial General Intelligence},
  pages 50--59. Springer, 2012.

\bibitem[DW80]{dorfler1980category}
W~D{\"o}rfler and DA~Waller.
\newblock A category-theoretical approach to hypergraphs.
\newblock {\em Archiv der Mathematik}, 34(1):185--192, 1980.

\bibitem[Geo10]{georgescu2010probabilistic}
George Georgescu.
\newblock Probabilistic models for intuitionistic predicate logic.
\newblock {\em Journal of Logic and Computation}, page exq033, 2010.

\bibitem[GIGH08]{PLN}
B.~Goertzel, M.~Ikle, I.~Goertzel, and A.~Heljakka.
\newblock {\em Probabilistic Logic Networks}.
\newblock Springer, 2008.

\bibitem[GPG13a]{EGI1}
Ben Goertzel, Cassio Pennachin, and Nil Geisweiller.
\newblock {\em Engineering General Intelligence, Part 1: A Path to Advanced AGI
  via Embodied Learning and Cognitive Synergy}.
\newblock Springer: Atlantis Thinking Machines, 2013.

\bibitem[GPG13b]{EGI2}
Ben Goertzel, Cassio Pennachin, and Nil Geisweiller.
\newblock {\em Engineering General Intelligence, Part 2: The CogPrime
  Architecture for Integrative, Embodied AGI}.
\newblock Springer: Atlantis Thinking Machines, 2013.

\bibitem[Gra]{gray2014digraph}
Charles~T Gray.
\newblock The digraph lattice.
\newblock \url{goo.gl/qta9N4}.

\bibitem[HT97]{hahn1997graph}
Ge{\v{n}}a Hahn and Claude Tardif.
\newblock Graph homomorphisms: structure and symmetry.
\newblock In {\em Graph symmetry}, pages 107--166. Springer, 1997.

\bibitem[KS12]{knuth2012foundations}
Kevin~H Knuth and John Skilling.
\newblock Foundations of inference.
\newblock {\em Axioms}, 1(1):38--73, 2012.

\bibitem[SF14]{soares2014questions}
Nate Soares and Benja Fallenstein.
\newblock Questions of reasoning under logical uncertainty.
\newblock Technical report, Citeseer, 2014.

\end{thebibliography}

\end{document}